\definecolor{codegreen}{rgb}{0,0.6,0}
\definecolor{codegray}{rgb}{0.5,0.5,0.5}
\definecolor{codepurple}{rgb}{0.58,0,0.82}
\definecolor{backcolour}{rgb}{0.95,0.95,0.92}
\lstdefinestyle{mystyle}{
    backgroundcolor=\color{backcolour},
    commentstyle=\color{codegreen},
    keywordstyle=\color{magenta},
    numberstyle=\tiny\color{codegray},
    stringstyle=\color{codepurple},
    basicstyle=\footnotesize,
    breakatwhitespace=false,
    breaklines=true,
    captionpos=b,
    keepspaces=true,
    numbers=left,
    numbersep=5pt,
    showspaces=false,
    showstringspaces=false,
    showtabs=false,
    tabsize=2
}
\theoremstyle{plain}
\newtheorem{theorem}{Theorem}
\newtheorem{definition}{Definition}
\title{\textbf{Learning model-based strategies in simple environments with hierarchical q-networks}}
\author{Necati Alp Müyesser \\
        \small{Department of Mathematical Sciences} \\
        \small{Carnegie Mellon University}\\
        \texttt{\small{nmuyesse@andrew.cmu.edu}}
        \and
        Kyle Dunovan \\
        \small{Department of Psychology}\\
        \small{Carnegie Mellon University}\\
        \texttt{\small{kdunovan@andrew.cmu.edu}}
        \and
        Timothy Verstynen\\
        \small{Department of Psychology}\\
        \small{Center for the Neural Basis of Cognition}\\
        \small{Carnegie Mellon University} \\
        \texttt{\small{timothyv@andrew.cmu.edu}}}
\begin{document}

\maketitle

\begin{abstract}
Recent advances in deep learning have allowed artificial agents to rival human-level performance on a wide range of complex tasks; however, the ability of these networks to learn generalizable strategies remains a pressing challenge. This critical limitation is due in part to two factors: the opaque information representation in deep neural networks and the complexity of the task environments in which they are typically deployed. Here we propose a novel Hierarchical Q-Network (HQN), motivated by theories of the hierarchical organization of the human prefrontal cortex, that attempts to identify lower dimensional patterns in the value landscape that can be exploited to construct an internal model of  rules in simple environments. We draw on combinatorial games, where there exists a single optimal strategy for winning that generalizes across other features of the game, to probe the strategy generalization of the HQN and other reinforcement learning (RL) agents using variations of Wythoff’s game. Traditional RL approaches failed to reach satisfactory performance on variants of Wythoff’s Game; however, the HQN learned heuristic-like strategies that generalized across changes in board configuration. More importantly, the HQN allowed for transparent inspection of the agent’s internal model of the game following training. Our results show how a biologically inspired hierarchical learner can facilitate learning abstract rules to promote robust and flexible action policies in simplified training environments with clearly delineated optimal strategies.
\end{abstract}
\newpage

\section{Introduction}

\par Deep reinforcement learning (DRL) networks currently rival human-level performance in a variety of domains, including object recognition \cite{objectrec}, speech recognition \cite{speechrec}, video games \cite{atari}, and complex board games such as Go \cite{alphago}. Despite the impressive achievements of DRL, networks fail to adapt to trivial changes of the inputs and goals of the learning task, such as changes to board dimensions and structure \cite{lake2017building}. One potential reason for this shortcoming is that DRL algorithms learn through extensive feedback about the value of specific input-output associations, without any appreciation for the organizing features of the game that govern these associations. In contrast, evidence from cognitive science suggests that humans learn to perform complex tasks through a model-based approach that involves constructing an internal model of the organizing principles or rules of the environment \cite{lake2017building, toyama2017simple}. This form of learning is particularly important in dynamic environments where survival depends on the ability to generalize previous training to novel settings \cite{kool2016does}. While model-based learning algorithms stand to improve the robustness of DRL networks in dynamic environments, it remains a largely unanswered question how this might be achieved.

\par One reason for the paucity of model-based approaches to deep learning is that DRL agents are typically developed to solve highly complex tasks, thereby precluding any straightforward process for exploring possible internal models of the environment\cite{marcus2018deep}. One way to facilitate development of deep model-based learning agents is to identify simpler testing environments that effectively reduce the number of available features from which internal models can be constructed. A recent paper \cite{leike} highlighted several advantages of simplified environments for evaluating the safety and robustness of DRL agents, showing that simple "gridworld" environments provide a tractable way for identifying pitfalls in the learned action policy. Indeed, in these simple environments two state-of-the-art DRL networks failed to effectively adapt to subtle differences between training and testing environments, highlighting the need for more robust RL and DRL algorithms. Another important environmental characteristic for the purposes of building model-based agents is the ability to alter, remove, or introduce dimensions of the environment without rendering previous training irrelevant. In other words, in order to fairly evaluate the success of the agent, there needs to exist a reliable strategy or internal model that is robust across variations of the environment or task rules \cite{lieder2015use}.

\par The conditions described above are satisfied by a class of impartial combinatorial games \cite{berlekamp}. The most notable difference between an impartial game and a game such as Go is that most impartial games have a \textit{ground truth} solution. Every position in an impartial game is either $\textit{hot}$, meaning that there exists a winning strategy for the player about to make a move, or $\textit{cold}$, meaning that under optimal play, the player about to make a move will always lose. The distribution of $\textit{hot}$ and $\textit{cold}$ positions across the state space in an impartial game usually comes with inherent mathematical structure. In an impartial game, Player $1$ ($p_1$) and Player $2$ ($p_2$) alternate in making moves until there are no available moves to make, with the player to make the last move declared the winner. The function that takes in a state and returns the set of available legal actions has as its domain an infinite set, meaning it can be generalized to arbitrary dimensions, lending impartial games particularly amenable to a model-based learning strategy.

\par Working within the constraints of impartial game theory, we now consider the differences between RL agents with model-free and model-based learning strategies. Substantial evidence from cognitive psychology and neuroscience suggests that model-based learning is associated with hierarchical information processing, with action-value associations learned at lower levels of the hierarchy and abstract predictions about the environment at higher levels \cite{doll12, smittenaar, wunderlich, doll, russek17, odoherty17}. One example of such a hierarchy is the prefrontal cortico-basal ganglia (BG) network \cite{frank, badre2009} found in many mammalian species. Converging evidence from human and animal neurophysiological experiments shows that the prefrontal BG networks engage in model-free learning of action-values, driven by phasic dopaminergic signals from the midbrain that alter the weights of cortical inputs to the BG in accordance with environmental feedback \cite{schultz97, eshel2015arithmetic, eshel2016dopamine}. As a result of this plasticity, rewarded (or punished) actions become more (or less) likely to be executed in the future. This form of learning is analogous to that enacted by deep Q-Learning (DQL) agents that exhibit behavioral policies determined solely by the feedback of previous actions.

\par Importantly, rather than arising from an entirely separate and independent process, model-based learning can be viewed as a companion system that is both informed by and exerts control over the feedback-dependent associations formed through model-free learning \cite{odoherty17}. Behavior becomes “model-based” when lower-level feedback dependent representations are leveraged to construct an internal model of the environmental dynamics responsible for previous observations \cite{russek17}, sometimes referred to as a “generative model”. A key difference in the behavioral outcomes of these two forms of learning is flexibility \cite{daw2005, odoherty17}: model-free learning results in habitual actions based on a static cache of associated values whereas model-based learning results in goal-directed actions based on inferred dynamics of the environment. Evidence from human neuroimaging experiments suggests that the shift from model-free to model-based policies is driven by a concomitant shift from BG to prefrontal behavioral control \cite{daw2005, badre}, signaling a shift away from feedback-dependent knowledge to active predictions drawn from the agent’s internal model of the environment.

\par Motivated by the hierarchical organization of prefrontal cortico-BG systems that are thought to implement model-based learning in the human brain, we devised a novel Hierarchical Q-Network (HQN) that attempts to build an internal strategy (e.g., generative model) based on inferred patterns of hot and cold positions (e.g., model-free Q-learning) on a variant of impartial combinatorial games called Wythoff’s game. We show how this hierarchical learning structure promotes generalizability and robustness to rule changes while also improving post-training interpretability of learning outcomes. Compared to the performance of standard Q learning and deep Q-Networks, the HQN is markedly faster at learning the task and, more importantly, shows clear benefits to the transfer of learning, not only to alterations of Wythoff’s game, but across a variety of other impartial games with distinct, but similar rule structures. Below, we describe our findings, highlighting 1) the benefits afforded by impartial games for developing more robust deep learning agents and 2) the importance of hierarchical learning in environments that demand flexibility.

\begin{figure}[t]
\includegraphics[width=12cm]{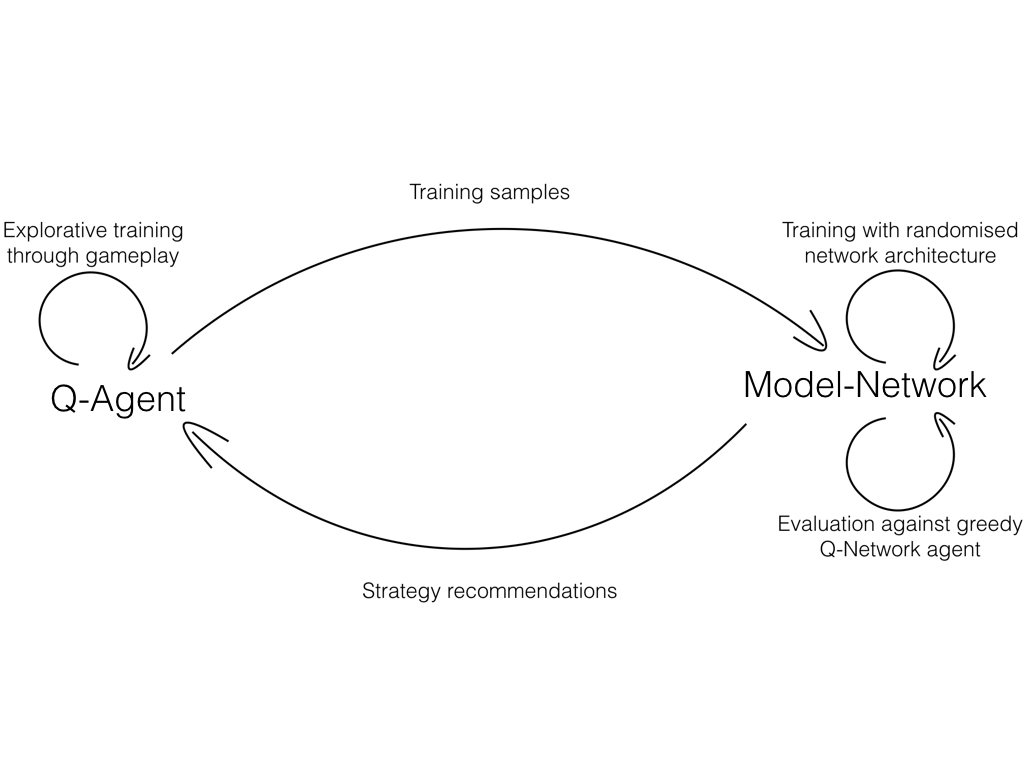}
\centering
\caption{The proposed model-based learning architecture: The Hierarchical Q-Network (HQN). The Q-Network and the Model Network use datasets that differ in dimensionality, but cooperate in order to generate a generalized model for the impartial game
environment.}
\end{figure}

\section{Methods}

\subsection{Impartial games: Wythoff's game, Nim, and Euclid}
Wythoff's game is played on a two dimensional grid in which players alternate turns to move an object that is initially on the bottom-right corner towards the top-left corner. The player who gets to place the object in the top-left corner terminates, and thereby wins the game. Every turn, the object can be moved horizontally, vertically, or diagonally towards the top-left corner.

\begin{definition}
Wythoff's game is an impartial game where the states are all 2-dimensional non-negative integer coordinates. From coordinates $(a, b)$, $p_1$ and $p_2$ can access all states of the form $(i, b)$, $(a, j)$, and $(a-k,b-k)$ where $0<i<a$, $0<j<b$, and $0<k<min(a,b)$.
\end{definition}

As mentioned above, every position in an impartial game is either $\textit{hot}$ or \textit{cold}, indicating whether $p_1$ or $p_2$ will win the game under optimal play. For formal definitions of $\textit{hot}$ and \textit{cold} positions, see Definition \ref{def:hotcold}, for an inductive proof of the partitioning, see Theorem \ref{thm:hotcold}.

\par The partition of $\textit{hot}$ and \textit{cold} positions in Wythoff's game is deeply embedded in properties of the Fibonacci string and the golden ratio \cite{berlekamp}.

\begin{theorem}\label{thm:wythoff}
  Let $r_k = \lfloor k \phi \rfloor$ and $c_k = \lfloor k \phi^2 \rfloor$ where $\phi=\frac{1+\sqrt{5}}{2}$ is the golden ratio. Then, all cold positions in Wythoff's game is in the form $(r_k, c_k)$ or $(c_k, r_k)$, where $k$ is a natural number.
\end{theorem}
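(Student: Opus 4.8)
The plan is to prove the theorem by the standard characterization of cold positions in impartial games: a position is cold if and only if every move from it leads to a hot position, and a position is hot if and only if some move from it leads to a cold position (this is the content of Definition~\ref{def:hotcold} and Theorem~\ref{thm:hotcold} cited above, with the terminal position $(0,0)$ being cold). So it suffices to show that the set $S = \{(r_k,c_k),(c_k,r_k) : k \in \mathbb{N}\}$, together with $(0,0)$ (the $k=0$ case, since $r_0=c_0=0$), satisfies two properties: (i) from any position in $S$, every legal move lands outside $S$; and (ii) from any position not in $S$, there is a legal move landing in $S$. By the inductive characterization in Theorem~\ref{thm:hotcold}, this forces $S$ to be exactly the set of cold positions.

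Let me think about what I actually need.

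The key combinatorial input is Beatty's theorem: since $\phi$ and $\phi^2$ are irrational with $\frac{1}{\phi} + \frac{1}{\phi^2} = 1$, the sequences $(r_k)_{k\ge 1} = (\lfloor k\phi\rfloor)$ and $(c_k)_{k\ge 1} = (\lfloor k\phi^2\rfloor)$ partition the positive integers. I'll also use the elementary identity $c_k = r_k + k$ (which follows from $\phi^2 = \phi + 1$, so $\lfloor k\phi^2\rfloor = \lfloor k\phi\rfloor + k$), and monotonicity of both sequences.

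**Proof strategy.**

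The plan is to use the standard recursive characterization of cold positions in an impartial game (the content of Theorem~\ref{thm:hotcold}): a position is cold exactly when every legal move from it leads to a hot position, and it is hot exactly when some legal move leads to a cold position, with the terminal position $(0,0)$ being cold. Hence it suffices to exhibit a set $S$ of positions such that (i) no legal move connects two elements of $S$, and (ii) from every position outside $S$ there is a legal move into $S$; any such $S$ must coincide with the set of cold positions by a straightforward induction on, say, $a+b$. I would take $S = \{(r_k, c_k) : k \ge 0\} \cup \{(c_k, r_k) : k \ge 0\}$, noting $r_0 = c_0 = 0$ so $(0,0) \in S$.

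**Key ingredients.** Before checking (i) and (ii), I would record three elementary facts about the sequences. First, since $\phi^2 = \phi + 1$, we have the identity $c_k = \lfloor k\phi^2 \rfloor = \lfloor k\phi \rfloor + k = r_k + k$; so the "diagonal distance" of the $k$-th cold pair from the axis is exactly $k$. Second, both $(r_k)_{k\ge 1}$ and $(c_k)_{k\ge 1}$ are strictly increasing. Third — and this is the arithmetic heart of the argument — Beatty's theorem: because $\phi$ and $\phi^2$ are irrational with $\tfrac{1}{\phi} + \tfrac{1}{\phi^2} = 1$, the two sequences $(r_k)_{k\ge 1}$ and $(c_k)_{k\ge 1}$ partition the positive integers, each appearing exactly once. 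I would prove or cite Beatty's theorem as a lemma.

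**Carrying out (i), the "cold positions are mutually unreachable" part.** Fix a cold pair, say $(r_k, c_k)$ with $k \ge 1$ (the case $(c_k, r_k)$ is symmetric, and $(0,0)$ is terminal). A legal move produces either $(i, c_k)$ with $i < r_k$, or $(r_k, j)$ with $j < c_k$, or $(r_k - t,\, c_k - t)$ with $0 < t \le r_k$. For the first type: if $(i, c_k)$ were cold it would have to be $(r_m, c_m)$ for some $m$ (since $c_k > r_k \ge r_k = $ first coordinate, it cannot be of the $(c_m, r_m)$ form unless $i = c_m$ with $c_m < r_k < c_k = r_m$, contradicting $c_m > r_m$), and then $c_m = c_k$ forces $m = k$ by injectivity, forcing $i = r_k$, a contradiction. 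The second and third types are handled analogously, using $c_k = r_k + k$ to see that a diagonal move changes the coordinate difference away from $k$, hence cannot land on $(r_m, c_m)$ with difference $m$ unless $m = k$, which again pins the move length to $t = 0$. Injectivity from Beatty's theorem is what makes each of these contradictions go through.

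**Carrying out (ii), the "from any hot position you can reach a cold one" part.** Take $(a,b) \notin S$. If $a = b > 0$, a diagonal move to $(0,0)$ works. Otherwise assume $a < b$ (symmetric if $a > b$); let $d = b - a \ge 1$. I claim one of the following moves reaches $S$: (a) if the $k$ with $r_k = a$ exists and $c_k = r_k + k \le b$, then either slide vertically to $(a, c_k) = (r_k, c_k)$, or if $c_k > b$ this cannot happen; (b) if $a = c_k$ for some $k$ (the "large" Beatty sequence), slide vertically down to $(c_k, r_k)$ provided $r_k < b$ — but $r_k < c_k = a < b$, so this always works; (c) otherwise, the diagonal strategy: one shows there is a unique $m$ with $c_m - r_m = m$ equal to $d$... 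I would instead argue cleanly as follows. Consider the two cases according to whether $a$ lies in the $r$-sequence or the $c$-sequence (Beatty covers all positive integers; handle $a=0$ separately, where $b>0$ is hot and a vertical move to $(0,0)$ wins). The main obstacle — and the step I expect to require the most care — is the diagonal case: showing that when neither a vertical nor horizontal move into $S$ is available, the value $t$ needed for the diagonal move $(a-t, b-t)$ to reach a cold pair with coordinate difference $b - a$ satisfies $0 < t \le \min(a,b)$. This reduces to the monotonicity fact that the map $k \mapsto c_k - r_k = k$ together with $r_k \le a$ forces a suitable bounded $t$; I would make this precise by a short case analysis on where $a$ and $b$ sit relative to $r_k$ and $c_k$, invoking Beatty's theorem to guarantee the relevant index $k$ exists and the resulting move is legal.

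**Conclusion.** Having verified (i) and (ii), the recursive characterization forces $S$ to equal the set of cold positions, which is exactly the claim that every cold position has the form $(r_k, c_k)$ or $(c_k, r_k)$. The main real work is Beatty's theorem (for injectivity and coverage) and the bounded-move bookkeeping in step (ii); everything else is routine induction.
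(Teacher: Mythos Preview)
The paper does not prove Theorem~\ref{thm:wythoff}: in the Appendix it explicitly states that the proof ``is again somewhat involved, and hence we omit a proof for Theorem~\ref{thm:wythoff}.'' So there is no paper proof to compare against.

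Your outline is the standard, correct approach: identify the candidate set $S$, verify that $S$ is closed under ``no move stays in $S$'' and that every position outside $S$ has a move into $S$, then invoke the recursive characterization of cold positions. Beatty's theorem together with $c_k = r_k + k$ is exactly the right toolkit.

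Two small points worth tightening. First, in part~(i) you write that ``a diagonal move changes the coordinate difference away from $k$''; in fact a diagonal move \emph{preserves} the coordinate difference, which is precisely why landing on some $(r_m,c_m)$ forces $m=k$ and hence $t=0$. Your conclusion is right but the sentence says the opposite of what you mean. Second, your part~(ii) is somewhat disorganized; the clean case split is on whether $a$ (the smaller coordinate) equals $r_k$ or $c_k$ for some $k$ (Beatty guarantees exactly one). If $a=c_k$, a vertical move to $(c_k,r_k)$ works since $r_k<c_k=a<b$. If $a=r_k$, compare $b$ with $c_k$: if $b>c_k$ move vertically to $(r_k,c_k)$; if $b<c_k$ then $d=b-a<k$, so $r_d<r_k=a$ and a diagonal move of length $a-r_d$ lands on $(r_d,c_d)$. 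This replaces the hedged ``I expect to require the most care'' paragraph with a three-line argument.
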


The mathematical structure of Wythoff's game (expressed by Theorem \ref{thm:wythoff}) manifests in a highly patterned separation of $\textit{hot}$ and $\textit{cold}$ positions (see Figure \ref{fig:hotcold}).

\begin{figure}[htp]
\includegraphics[width=8cm]{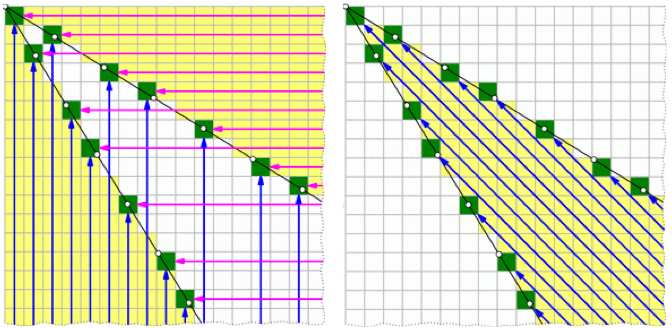}
\centering
\caption{Cold positions in Wythoff's game are distributed along two symmetrical lines. Arrows show how from every other position (hot) there exists a move to a cold position. There does not exist any move from a cold position to another cold position. Figure used with permission from Zachary Abel \cite{zach}}
\label{fig:hotcold}
\end{figure}

\par While we benchmark our HQN agent on Wythoff's game, we will also subject the game to certain rule changes in later sections. The $\textit{hot}$-\textit{cold} partition of the resulting games are structurally similar to Wythoff's game, and are discussed in the Appendix section. See Figure~\ref{fig:nimeuclid} for a visualization.
\begin{definition}
  We denote by Nim the impartial game resulting from a Wythoff's game where diagonal moves are disallowed.
\end{definition}
\begin{definition}
  We denote by Euclid the impartial game resulting from a Nim where a distance travelled in the horizontal or vertical direction has to be a multiple of the minimum of the horizontal and vertical distance to the top-left corner.
\end{definition}

\subsection{Hierarchical Q-Network (HQN)}

\subsubsection{Overview}

The HQN is comprised of two interconnected systems, the Q-agent and the Model-Network, that attempt to cooperatively generate an internal model for the task environment while working with datasets that differ in dimensionality. The Q-agent works with a high-dimensional dataset reflecting the expected value of state-action pairs. The Model-Network, on the other hand, feeds off of the conclusions of the Q-agent obtain a low-dimensional dataset that solely reflects the expected values of given states. The Model-Network uses a deep neural network to extrapolate a model from the extracted dataset and evaluates the value of the model by testing the model against an opponent simulated by the Q-agent. In return, the Model-Network biases the action policy of the Q-agent to favor movements to states more likely to generalize to larger environments. The behavior of the Q-agent then effectively explores state-action pairs that contradict or corroborate the current generative model of the Model Network. The HQN succeeds if and only if the Model-Network converges on a generalizable model of the given environment.

\subsubsection{Network Details}
A summary of the underlying logic behind the HQN is given above, whereas a detailed discussion about its implementation including pseudocode is provided below. Here, we provide details about the architectures of the networks that compose the HQN.
\par The Q-agent component of the HQN uses Q-Learning \cite{suttonbarto} to build estimates for how good a given state is for the player, based solely on gameplay experience. The learning rate ($\alpha$) is set to $0.1$ and the discount rate ($\lambda$) is set to $1$. Action selection is randomized through a Boltzmann distribution where the exploration constant ($\beta$) is set to $0.7$. Further details are given in the next section.

\par The Model-Network is a feed-forward, single-layer ($15$ neurons), and fully-connected network. The error limit and maximum number of iterations are also randomized, to account for errors due to over-fitting or under-fitting. Sigmoid-activation function is used as the activation function for individual neurons. We use the standard backpropogation algorithm \cite{rumelhart} to train the network with the cross-entropy function as the cost-function to avoid learning slow-down, as dysfunctional models are expendable, making the trade-off worthwhile. The cross-entropy cost-function is given by: $$ Cost=-\frac{1}{n}\sum_{x}[y\ln{a} + (1-y)\ln{(1-a)}]$$
Where $x$ is over all training inputs, $n$ is the number of inputs, $y$ is the desired output, and $a$ is the output of the neuron.
\par Nimblenet \cite{nimble} library for Python was used in order to simulate the neural network that is in the architecture of the Model-Network. Further details about the separate networks and their interaction is given in the next section.

\subsubsection{The Q-agent}

\par The Q-agent relies on Q-Learning, a standard model-free reinforcement learning technique \cite{suttonbarto}, to estimate the expected value (Q-value) of a state-action pair in a given environment over multiple training sessions (see Figure \ref{fig:QLearnPython}). Every move made adjusts the values stored in the Q-table through value iteration update. Directly updating  state-action pairs in this way affords greater precision, and is computationally simpler, than relying on error propagation to adjust the weights of neural network. The Q-Network was able to achieve similar performance to the basic Q-agent when action values were estimated independent of the current state (e.g., board position), albeit less efficiently.

\par Here $(s,a)$ is a state-action pair, while $s'$ is the new state after action $a$ is taken at state $s$, $r_{(s,a)}$ is the reward associated with $(s,a)$, and $\alpha$ and $\lambda$ are the learning and the discount rate, respectively. Finally, $moves(s)$ is the function that returns the set of available actions from state $s$ in the environment. Then, the Q-value is updated through: $$Q(s,a):= Q(s,a) + [\alpha\cdot(r_{(s,a)} +\lambda\cdot\max_{a'\in moves(s')}(Q(s',a')) - Q(s,a)]$$

\par We take $\alpha=0.1$ and $\lambda=1$, since every action has equal effect on the outcome in a given impartial game, so discounting future rewards is redundant.

\par Actions are selected through the Boltzmann distribution that uses current approximations of the Q-Values to generate a weighted probability space.

\par Explicitly, the probability that action $a\in moves(s)$ is selected is given by: $$\frac{e^{\beta\cdot Q(s,a)}}{\sum_{a'\in moves(s)}{e^{\beta\cdot Q(s,a')}}} $$

\par Here $\beta$ is the constant that determines how exploratory or exploitative the action selection process is going to be. We set $\beta=0.7$ throughout, reflecting a moderate degree of exploratory behavior in the model.

\par It is important to note that the process that selects actions to explore also depends on the \textit{model} as a variable. Actions favored by the $model$ generated by the Model-Network get a boost in their probability of being selected. Before the decision process is left to the Boltzmann probability space, the HQN decides whether to explore the action recommended by the \textit{model} with probability: $$L - e^{-\zeta\cdot\mathds{E}[model]} $$
$\mathds{E}[model]$ is the expected value, or performance of the \textit{model} as computed by the Model-Network. $L$ is the limit imposed on the confidence on the \textit{model}, in order to maintain that the Q-agent still operates mostly independently of the Model-Network. Otherwise, ``echo-loops'' may be created in the HQN. $\zeta$ is the
steepness factor, determining how fast the probability approaches the limit $L$ as $\mathds{E}[model]$ increases. We set $L=0.25$ as an appropriate limit, and $\zeta=7$ to get an optimal probability function with respect to $\mathds{E}[model]$, where the model does not begin to influence the Q-agent until $\mathds{E}[model]>0.25$.

\begin{figure}
\begin{lstlisting}[language=Python]
  def Q_learn(Q, trials, lr, y, beta, (rows, cols), bestModel):
      for episode in range(trials):
          #initialize random game given dimensions
          game.row, game.col = randint(0, rows), randint(0, cols)
          while not game.isTerminal():
              state = game.getIndex()
              #noisy decision process for action selection:
              if random.random()<getModelConf(zeta, lim, model):
                  #take tip from model to select action
                  action = getModelDecision(model, game)
              else:
                  #or explore through weighted probability dist:
                  pSpace = makeBoltzmannPspace(Q, game, beta)
                  action = weighted_choice(pSpace) #select action
              game.makeAction(action)
              if game.isTerminal(): reward = 1 #action wins game
              elif not game.isTerminal():
                  makeGreedyQMove(Q, game) #simulated opponent move
                  if game.isTerminal(): reward=-1 #opponent wins
                  else: reward = 0 #game goes on
              #get best obtainable Q-value greedily:
              nextQValue = max(Q[game.getIndex()].values())
              #update Q-Table:
              Q[state][action] = lr * (reward + y*nextQValue
                                      - Q_dict[state][action])
\end{lstlisting}
\caption{Python Code for Q\_learn, the learning algorithm used by the Q-agent}
\label{fig:QLearnPython}
\end{figure}

\subsubsection{The Model-Network}

\par The question that motivates the Q-agent is ``What moves should I make in which positions to maximize my likelihood of winning?''. However, this question is restricted to the space in which learning occurs. Thus, the question that motivates the Model Network is ``Are some positions better for me than others, if so, is there any \textit{structure} to how these positions are distributed across the board?'' For an $n$ by $m$ Wythoff's game, there are $2^{nm}$ possible ways in which good and bad positions could be distributed. But without the latter question, the former question seems too short-sighted in order to yield any useful insights into the nature of the game. The HQN architecture allows us to ask these questions simultaneously.

\par While the Q-agent attempts to approximate the Q-values of state-action pairs, the Model-Network works with simply the expected values of individual states in order to find a heuristic that will separate good states from bad states (see Figure \ref{fig:ModelLearnPython}). The expected value of a state is simply the Q-value of the best available action from that state. $$\mathbb{E}[state]= \max_{a\in moves(s)}Q(s,a)$$ The Model-Network, equipped with some fixed confidence threshold, $\epsilon$ creates a dataset classifying state $s$ as \textit{cold} if $0\leq\mathbb{E}[state]\leq\epsilon$ and \textit{hot} if $1-\epsilon<\mathbb{E}[state]\leq 1$. Random samples of this dataset are then fed into a neural network. We refer to the trained neural network as the \textit{model}. Architectural details about the neural network was given in the previous section.

\par The Model-Network evaluates the performance of a model by benchmarking against a greedy-agent that has access to the Q-agent, as opposed to perfect-play, thus the training process remains unsupervised. The model receives a performance score between $0$ and $1$, based on the ratio of games the model can win against the greedy-Q-agent. Since the Q-agent almost always remains more accurate on smaller board sizes, benchmarking games are played on a larger board size so as to favor potentially generalizable models.

\begin{figure}[h]
\begin{lstlisting}[language=Python]
  def modelBuild(Q, conf):
      Q_part = getQPartition(Q, qDim) #obtain E[s] values
      dataset = getModelDataSet(Q_part, conf) #noisy extraction
      modelNet = NeuralNet(networkSettings) #set up neural-net
      #train network with the extracted dataset:
      backpropagation(modelNet, dataset, cross_entropy_cost,
                      error_limit, max_iterations)
      #calculate model performance vs. greedy Q-agent:
      perf = evaluateModel(modelNet, Q, trials, game=Wythoff)
      return modelNet, perf
\end{lstlisting}
\centering
\caption{Code for modelBuild, learning algorithm used by the Model-Network}
\label{fig:ModelLearnPython}
\end{figure}

\subsubsection{Q-agent and Model-Network}

\begin{figure}[h]
\begin{lstlisting}[language=Python]
  def modelBasedLearn(Q, trials, game, ...):
      global bestModel, bestPerformance
      #re-evaluate current best model:
      currentPerf = evaluateModel(bestModel, Q, trials, game)
      if severePerformanceDrop(currentPerf, bestPerformance):
          modelMemory.add(bestModel) #store old high-perf model
          #check if any models in memory fit the task:
          bestModel = rememberModel(modelMemory, game)
          bestPerformance = evaluateModel(bestModel, .. )
          Q = reset_Q_agent(...)
      #train the Q-agent:
      Q_learn(Q, trials, bestModel, lr, y, beta, zeta, ...)
      #then, train a new model:
      (newModel, newPerformance) = modelBuild(Q, conf, ...)
      #compare new model to the past best performing model:
      if newPerformance>=bestPerformance:
            bestModel = newModel
            bestPerformance = newPerformance
\end{lstlisting}
\centering
\caption{pseudo-code for modelBasedLearn, the learning algorithm used by the HQN-agent. Q\_learn and modelBuild are the two main components of the algorithm}
\end{figure}

\par The full HQN integrates both the Q-agent and the Model-Network through the learning algorithm \textit{modelBasedLearn}. On every iteration of the \textit{modelBasedLearn} algorithm, the Q-agent is trained on the specified amount of gameplays, and the process concludes with the construction of a candidate model, potentially replacing the current best-performing model. Note that even models that eventually get outplaced have a positive impact on the learning outcomes, since \textit{hypotheses} from flawed models get contradicted by the Q-agent, allowing for the construction of more accurate models in upcoming iterations.

\par Performance of the HQN agent was also tested against changes in the rules of the game, without explicitly notifying the agent of such changes. It is crucial that the HQN agent is able to detect such changes and adapt to the new rules of the game, especially if the HQN agent was trained on the same set of rules earlier. In order to do so, we allow the HQN agent access to a dataset consisting of the calculated performances of the current best performing model. For a predetermined $\Delta$, if $\frac{currentPerf}{avgPerf}<\Delta$, where $avgPerf$ is the average of the past performances, and $currentPerf$ is the current calculated performance of the model, HQN detects a severe performance drop. In this case, the $bestModel$ is stored away on the $modelMemory$ if the need for that same model later arises. $modelMemory$ is also checked for the existence of models that would fit the new rules of the $game$, and if so, that model is used as the $bestModel$ variable.

\subsubsection{Model-Free Learning Agents}
We compared the HQN to two non-hierarchical implementations of Q-learning.
\par \textbf{Q-Agent}
\par We benchmark the HQN against an independent Q-Agent to illustrate the effect of the addition of the Model-Network to the system. The Q-Agent has an almost identical framework to the Q-Agent component of the HQN agent. The only difference is that this Q-agent does not have its exploration procedure influenced by a Model-Network. Hence, we do not include more details about its implementation.
\par \textbf{Q-Network}
\begin{figure}
  \includegraphics[width=12cm]{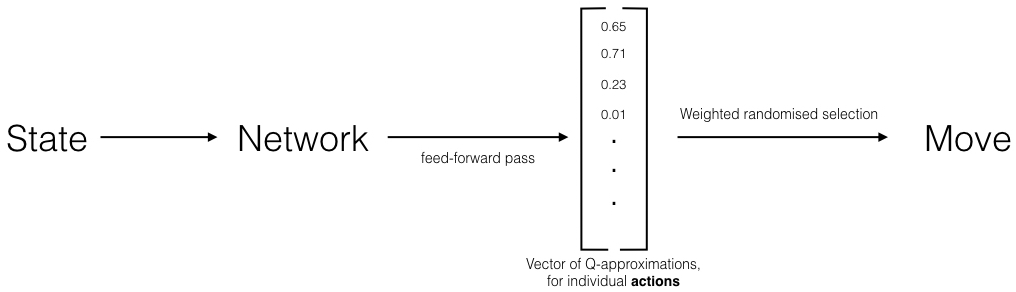}
  \caption{The initial stages of the training of the Q-Network. Given a state, the network attempts to produce a vector of values that represent the approximated reward of each possible action. While testing performance, the highest valued action will be greedily chosen. During training, action that is chosen will depend on how explorative the Boltzmann distribution is. Afterwards, weights are readjusted to account for the reward error via the backpropagation algorithm.}
\label{fig:qnetworkalgorithm}
\end{figure}
\begin{figure}[h]
\begin{lstlisting}[language=Python]
  def Q_network_learn(network, ...):
    (game.row, game.col) = (randint(0, rows), randint(0,cols))
    while not game.isTerminal():
      #Feed-Forward Pass:
      q_estimates=network.predict(game.getInstance())[0]
      #noisy Boltzmann decision process for action selection:
      pSpace = makeBoltzmannPspace(network, game, beta)
      move = weighted_choice(pSpace)
      #simulate game-play:
      game.makeAction(move) #agent's action
      if game.isTerminal():
        reward = 1
      else:
        makePerfectWythoffMove(game)
        if game.isTerminal():
          reward=-1
        else:
          #find reward for greedy move from new state:
          best_q = get_best_q(network, game)
          reward = y*best_q #discounted reward
      #update the Q vector from state accordingly
      q_estimates[move] += lr * (reward -  q_estimates[move])
      dataset=[Instance(state , q_estimates)] #resulting dataset
      #adjust neuronal weights based on the adjusted q_vector:
      backpropagation(network, dataset,cross_entropy_cost,
                                          max_iterations=1)

\end{lstlisting}
\centering
\caption{pseudo-code for Q\_network\_learn, the learning algorithm used by the Q-Network agent. The QN agent was not able to perform more accurately than random chance within the given time constraints of the benchmark.}
\label{fig:qnetwork}
\end{figure}
\par The core difference between the Q-Network and the Q-Agent is that the Q-Network makes use of a neural network to approximate the Q-function, whereas the Q-agent algorithm does not attempt to make an inference beyond the look-up-table process for the Q-values of the state-action pairs. A high-level explanation of the algorithm is given in Figure \ref{fig:qnetwork}, and detailed pseudo-code is given in Figure \ref{fig:qnetwork}. Nimblenet \cite{nimble} was used to simulate the neural network.
\par The network was fully-connected and single-layer, however, more layers did not have a significant effect on the learning outcomes. The standard backpropogation algorithm was used with the sum-squared-error cost function with sigmoid activation function on the individual units. $\alpha=0.01$ and $\beta=0.7$.

\section{Results}

\subsection{Model Building}

The Model-Network displayed great efficacy in producing generalizable models for Wythoff's game and its variants. Figure~\ref{fig:ModelStages} shows how the two components of the HQN learned the value of board positions at different stages of learning. Models that are developed in the earlier stages of training remain mostly irrelevant to generalization; however, models that \textit{meaningfully} generalize, although with low accuracy accurately, begin to emerge soon after initial training. Such models are crucial for the learning process because they influence the way the Q-Network chooses to explore different action spaces. Without such guidance, the Q-Network explores actions without any overall purpose or \textit{insight}. With the guidance from the Model Network, the Q-Network explores actions that would either contradict or confirm an overall hypothesis about the nature of the learning environment.

\begin{figure}[h]
\includegraphics[width=10.2cm]{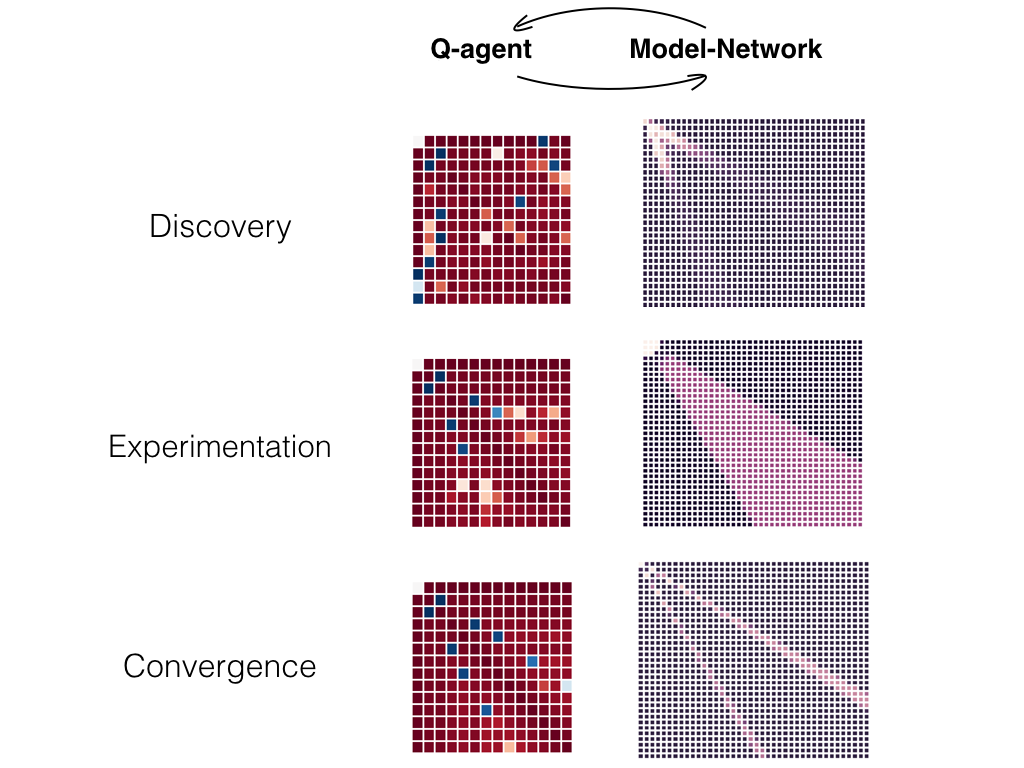}
\centering
\caption{HQN in various stages of training on Wythoff's game. The early models (Discovery) will be largely unsuccessful, while certain inaccurate generalizations (Experimentation) will supply reasonable strategies to the Q-Network, allowing the provision of useful datasets into the model network that translate into accurate and general models (Convergence) The Q-network has been trained for 2000 gameplays across each time-step.
}
\label{fig:ModelStages}
\end{figure}

\par Wythoff's games have the type of mathematical structure that should be very easy for a neural network to recognize, explaining a significant portion of the HQN agent's success. Unfortunately, neural networks are less adept in recognizing discrete, stepwise patterns then they are in recognizing regions and finding slopes. For example, even the best models generated by the HQN agent for Wythoff's game largely ignored the stepwise distribution of the cold positions across the line. As previously mentioned, this issue begs the existence of a layer that can be more flexible in the types of models that it could hypothesize.

\subsection{HQN Efficiency vs. Q-agent and Q-Networks}

We compared the performance of the HQN agent in Wythoff's game to that of a Q-agent and a Q-Network (QN) . Figure~\ref{fig:comparison} shows the accuracy of all three agents during learning. The HQN agent improves performance in discrete jumps as better models replace worse ones over time. Since models are assessed by the HQN in an unsupervised manner, some models evaluated to be better will in fact be less accurate, explaining the occasional fall in the performance of the HQN agent.

\begin{figure}[h]
\includegraphics[width=12cm]{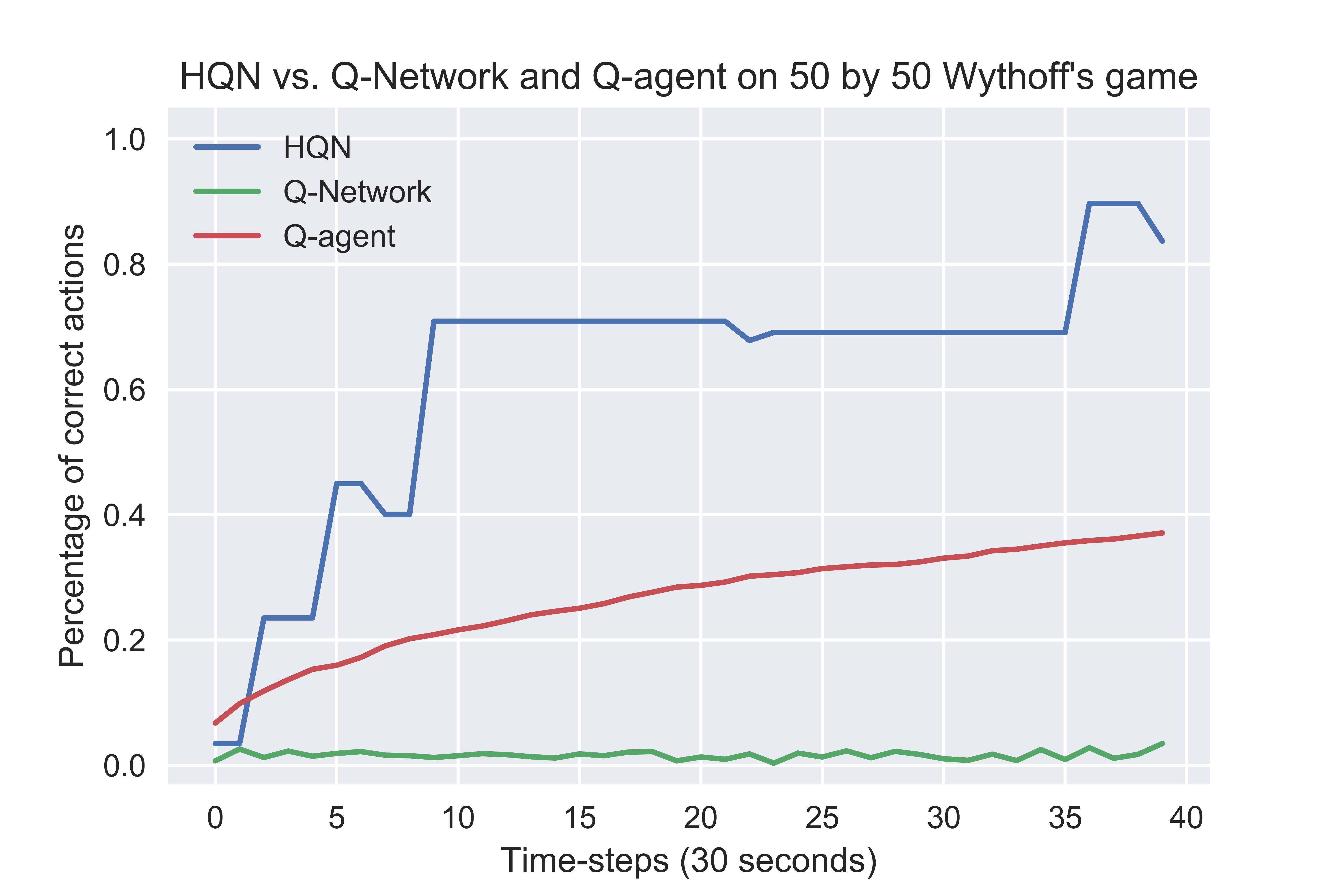}
\centering
\caption{The HQN vastly outperforms the Q-agent and the Q-Network on identical training periods with respect to time. Time-steps calculated on a 2.7 GHZ Intel i5 2-core processor (2015 MacBook Pro). The Naïve-Q is trained on a 50 by 50 board on 5000 gameplay simulations per time-step, whereas the Q-Network is trained on a 12 by 12 board with 1000 simulations each. The HQN learns better models in discrete jumps whereas the Q-agent has a steady but diminishing learning rate. Q-agent parameters: ($\lambda=1$, $\alpha=0.1$, $\beta=0.6$). Q-Network parameters: (single-layer with 15 neurons, standard backpropagation, fully-connected, sigmoid activation function, sum-squared-error cost function, $\alpha=0.01$, $\beta=0.7$) Change in network architecture, cost functions, or layer count did not have a noticable effect on the learning outcomes of the Q-Network agent.
}
\label{fig:comparison}
\end{figure}

\par The core idea that gives rise to the Q-Network is using neural networks to approximate the Q-function. Whereas a traditional Q-learning  attempts to fill in every single value for the Q-function in increasing accuracy in a look-up table manner, a Q-Network attempts to train a neural network that approximates this function. The Q-Network is also able to interpolate after training, since the network attempts to approximate the Q-function continuously, filling in for the gaps in the dataset. The benefits of such an approach have been demonstrated in detail in DeepMind's Atari Network \cite{atari}. However, while attempting to be more efficient and general than a naive Q-agent, Q-Networks  sacrifice a lot of stability. Re-training a Q-Network with a newly discovered dataset can be destructive to already existing features of the network. In order to remedy the destructive re-training issue, especially while training through large datasets, (deep) Q-Networks make use of ``experience replay'' \cite{atari}. A Q-Network agent that uses experience replay will store training data as it comes, and backpropagates that data across the network in occasional intervals, as opposed to some novel data-point.

\par For Wythoff's game, the Q-Network agent's performance was subpar compared to the HQN and naive Q-agents, even with additional modifications such as experience replay. Overall Q-Network performance did not exceed random chance significantly within the time constraints that allowed the HQN and the Q-agent to attain reasonable performance. Giving the Q-Network additional advantages, such as training against a perfect agent, or increasing the number of layers in the neural network, was not able to fix the disparity.

\par The only structural change that observably changed the behavior of the Q-Network was to equate actions and states in the training phase. In an impartial game, how good an action is depends only on which state the action takes the game to. Moves towards cold positions are good moves, whereas moves towards hot positions are bad moves. Under most learning tasks, this assumption does not hold, e.g. pressing left could win the game in a certain scenario, but be disastrous in the other. As illustrated in Figure~\ref{fig:qnetworkalgorithm}, the Q-Network, similar to the HQN and the Q-agent, does not operate under this assumption, since the network trains to approximate how good an action is given the state. $$action=\Delta(state)$$ However, we can hard-code the irrelevance of the starting state as a assumption, by representing an action as simply the encoding of the new state.
$$action=state_{new} $$ Under this framework, the task of the Q-Network would be to output the identical Q-vector that separates good states from bad states, given any state in the game. Since there are a lot more states in a game of Wythoff (Order=$\mathcal{O}(rows\cdot cols)$) then there are actions (Order=$\mathcal{O}(rows+cols)$, assuming $action=\Delta$(state)), the resulting Q-vector will be significantly larger. Increasing the number of neurons to the same order fixes the problem while slowing down training periods. However, the resulting modified agent is able to converge on strategies on a pace that is competitive.

\par This ``trick'', however, is inapplicable in most scenarios outside impartial games, that is why we did not hard-code such notions to the HQN agent. For a similar reason, we do not include the modified Q-Network agent that treats an $action$ as a $state_{new}$ in our analysis.

\subsection{HQN performance across dimensions}

\begin{figure}[h]
\includegraphics[width=10cm]{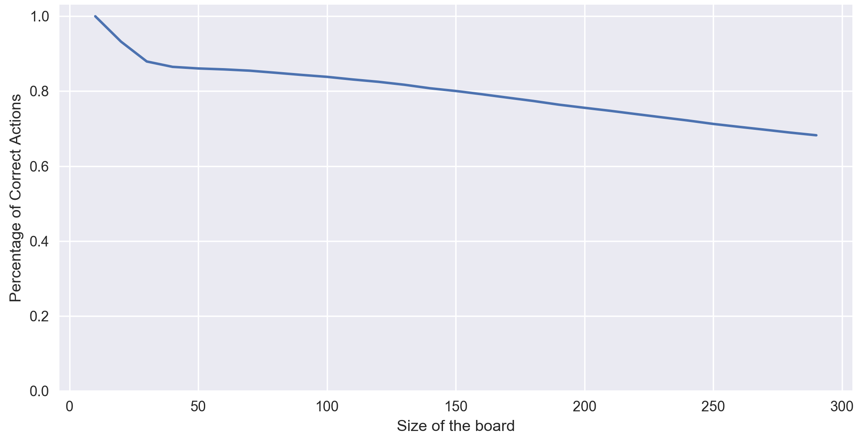}
\centering
\caption{Performance of a Model Network that the HQN agent converged to through datasets received from a Q-Network trained on a 12 by 12 board. Single layer feedforward network with cross-entropy cost function, trained for 2000 epochs, on a dataset extracted from the Q-Network of size 78. The HQN agent gave scores ranging between 0.95 and 0.97 to this model by benchmarking performance against a greedy Q-agent on a 50 by 50 board.
}
\label{fig:dimensions}
\end{figure}

\par The HQN agent was able to attain reasonably high levels of performance beyond the dimensions it was trained in. Figure~\ref{fig:dimensions} shows the accuracy of a specific model generated by the HQN agent for Wythoff's game across different board dimensions.

\par The Q-agent was collecting data on a 12 by 12 board, while the Model-Network was evaluating generated models against the Q-agent on a 50 by 50 board. The fact that the Model-Network tests models by their performance on dimensions that they were not trained on is crucial to prioritize generalizability across dimensions. The fact that the Q-agent cannot perform optimally on higher dimensions is also an advantage, since  models that achieve some level of generalizability will be assigned a higher score despite having poor accuracy on smaller boards.

\par Fortunately, even though the Model-Network was attempting to optimize performance up to a 50 by 50 board size, the models generated were able to display a reasonable degree of performance on boards that are larger. For the model in Figure~\ref{fig:dimensions}, the model achieved 70\% accuracy on a board that was 300 by 300.

\subsection{HQN performance across rule variations}

\begin{figure}[h]
\includegraphics[width=12cm]{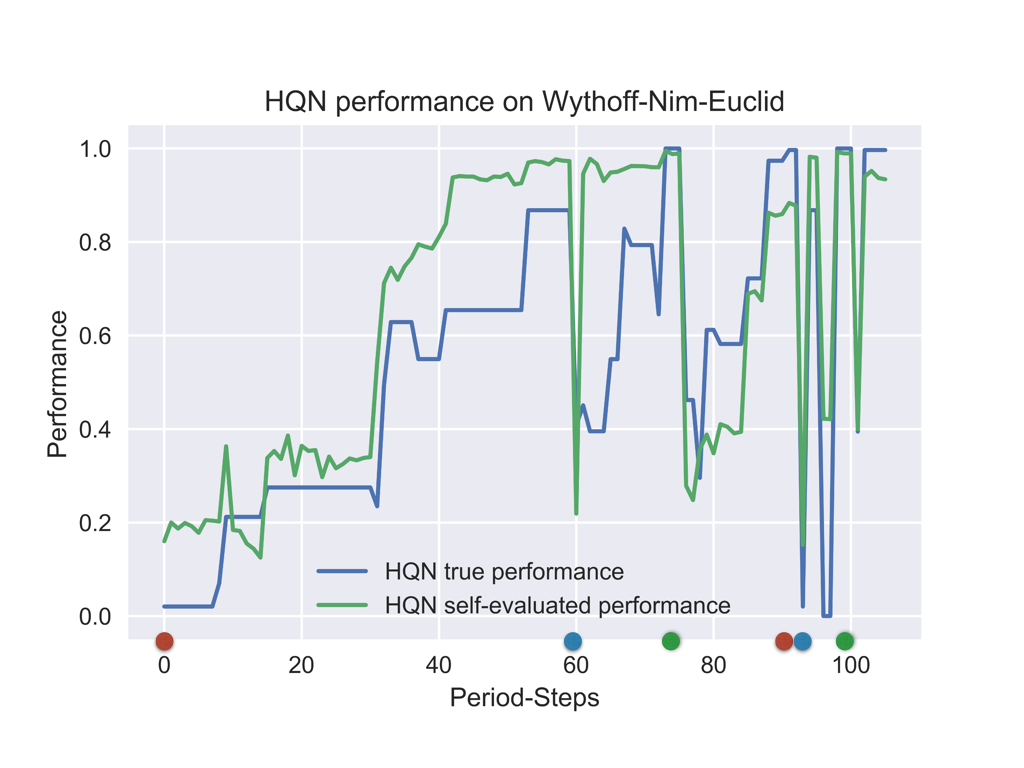}
\centering

\caption{Performance of a HQN across 3 different impartial combinatorial games. When the HQN converges on a model with performance exceeding a certain threshold, the rules of the game are changed. The HQN discovers the changes in rules, and adapts and reuses old models if they apply to the new set of rules. Performance decreases become less drastic as HQN learns all three games simultaneously. Tasks are cycled through in the order Wythoff-Nim-Euclid, shown as red-blue-green circles respectively in the x-axis. In a period, the Q-agent component of the HQN is trained across 250 gameplays for Wythoff, and 50 gameplays for Euclid and Nim.}
\label{fig:rules}
\end{figure}

HQN was also benchmarked against contexts where the rules of the game did not stay constant. In Figure~\ref{fig:rules} shows performance of the HQN agent across three games that had similar, but not identical, rules. The HQN agent started out by being trained through Wythoff's game. Once the agent reached satisfactory ($>0.9$) performance, we changed the gameplay rules to that of the game Nim, without informing the agent of this change. The agent was able to detect this change through the sharp decrease in the model's performance (as perceived by the HQN agent, displayed with the green line), store the model for the Wythoff task away, reset the Q-agent, and start training again. Since Nim (and later, Euclid) has less complex of an action space, we decreased the period from 250 Q-agent gameplays to only 50, in order to slow the learning process down for visualization purposes. When satisfactory performance was reached in Nim, we changed the learning task to Euclid. When we cycled through the three tasks in a similar fashion once again; however, the agent was able to attain satisfactory performance immediately after it detects a change in the rules.

\section{Discussion}

\par In this paper, we proposed some basic strategies for developing and evaluating agents that learn adaptable and robust strategies, an increasingly important goal for developing AI capable of navigating novel environments. The hierarchical structure of HQN showed promise in the transfer-learning domain, while remaining competitive with standard RL approaches in terms of performance. We trained a Q-agent, a Q-Network, and a HQN for identical amounts of time on the Wythoff's game (see Figure~\ref{fig:comparison}). The Q-agent was able to show improved accuracy, although at a steadily decreasing rate over time. The Q-Network, a more unstable but also more efficient advancement over the Q-agent algorithm, was not able to learn as well in this context of impartial games. The HQN agent, on the other hand, achieved increasing accuracy in discrete jumps as better models for the environment were discovered. The HQN also did more than merely excel in terms of efficiency. In the transfer-learning domain, where standard RL approaches are infamously unsuccessful, the HQN agent was able to achieve performance that generalized across dimensions (Figure~\ref{fig:dimensions}) and remain resistant to changes to rules of the game (Figure~\ref{fig:rules}). Most importantly, we could query the HQN agent to show its strategy for game play in an intuitive and explainable way.

\par Towards this goal of extensibilty in artificial agents, meta-reinforcement learning, the idea that RL agents can be trained to build better base networks for other RL agents to be trained on, holds a lot of promise. Wang et al. \cite{learningtolearn}, Duan et al. \cite{rl2} and Hansen \cite{deepepisodic} provide state-of-the-art approaches to Meta-reinforcement learning, that they call Deep Meta-Reinforcement Learning (DMRL), $RL^2$, and Deep Episodic Value Iteration (DEVI) respectively. These agents are evaluated against benchmarks beyond efficiency and accuracy metrics, including one-shot changes to rewards, and ability to learn abstract task structure. Real et al. \cite{imageclassifiers} and Miikkulainen et al. \cite{miikkulainen} also propose algorithms to optimize network architecture, including connectivity and parameters, for high-dimensional deep learning tasks such as image recognition and language modeling. We consider these efforts important as we aim for artificial networks that can generalize across tasks and yield interpretable outcomes.

\subsection{The Hierarchical Q Network}

\par Our key innovation in this study was the introduction of the  Hierarchical Q-Network (HQN), a \textit{model-based} learning agent that capitalizes on hierarchical information processing (see discussion of biological motivations in subsection \ref{sssec:num43}). The HQN was composed of an ``lower'' layer, the Q-agent, that explored through the high-dimensional state-action search space, and an ``higher'' layer, the Model-Network, that abstracted away the action dimension, and processed through the expected values of states to extract generalized structure from the environment. More important than the hierarchical structure of the HQN, however, is that the two networks interact in such a way that observations by the Q-agent effectively inform model building and that hypotheses generated by the Model Network effectively constrain future action policies. Without the Model-Network, the Q-agent blindly explores the massive search space without any ``insight''. Conversely, without the Q-agent, the Model-Network does not have any information with which to generalize from.

\par While the HQN's performance was superior than the other RL agents tested here, we should point out that it does suffer from limitations that future work should focus on. One of the inherent limitations of the HQN agent as proposed in this paper is that neural networks were used as the implementation of the Model-Network. Neural networks proved themselves to be suitable in a wide variety of learning tasks; however, there exists a wide range of limitations. For instance, a standard neural network will not be able to classify objects that follow a discrete pattern. For example, in Wythoff's game, even though the Model-Network was able to generate models that recognized the two symmetrical lines of cold positions, the network was unable to appreciate the discrete intervals separating cold positions along each of the lines. Processing-units that can independently and cohesively handle a vast array of decision problems are essential, if the goal is to understand and simulate how the biological brain can seamlessly navigate a highly-complex physical environment where inputs and goals of a learning task can change rapidly. We propose that symbolic representations combined with the strengths of statistical approaches of neural networks might be extremely useful. An initial attempt to explore such an intersection is given by Garnelo et al. where they propose a symbolic model-based learning agent \cite{garnelo}. We intend to follow a similar direction in our future work.

\subsection{Biological Motivations for Hierarchical Processing} \label{sssec:num43}

\par The advantages of the HQN, along with recent work by others \cite{lake2017building, atari, deepepisodic}, suggests that hierarchical structure is an effective catalyst for adaptive and generalizable learning in artificial agents. Indeed, substantial evidence from experimental and computational neuroscience suggests the same is true of biological brains \cite{doll, frank, badre}, pointing towards the looped architecture of cortico-basal ganglia networks as an important feature for model-based and model-free learning systems \cite{frank, badre}. The basal-ganglia (BG) is a subcortical network that receives widespread cortical input through the striatum, forming a channel-like architecture - each channel representing a particular action - that loops back up to motor cortex through the thalamus \cite{alexander}. Critically, each action channel in the BG contains a facilitation and suppression pathway, capable of exerting bidirectional control over the corresponding action channel in primary motor cortex. Schultz and colleagues \cite{schultz} famously showed that, during learning, the weights of these pathways are adjusted by phasic changes in striatal dopamine, encoding both the magnitude and sign of the prediction errors estimated from Q-learning models. This dopamine-dependent plasticity of cortico-striatal connections serves to reinforce the future selection of rewarding actions while also suppressing less desirable alternatives, serving a similar computational goal to that of Q-Networks \cite{carrot, cox, kravitz}. However, as previously mentioned, relying on feedback alone to drive learning 1) quickly becomes inefficient as task complexity increases, 2) limits the range of learned associations that can be simultaneously stored and exploited, and 3) fails to account for the robust and flexible nature of mammalian behavior.

\par The fundamental idea behind model-based learning is that, through experience and observation, internal beliefs are formed about the causal relationship between contextual features, states, and action values. For hierarchically structured tasks, for which state-action values depend on multiple, nested contextual features, generative models offer an imperfect but highly efficient strategy for guiding action selection. Critically, however, implementing a model-based learning strategy often relies on simultaneously learning from feedback in a model-free manner. Thus, the challenge of implementing model-based learning is two-fold, requiring 1) a generative mechanism for constructing hypotheses and 2) fluid interaction between inferential and feedback-dependent learning systems.

\par Both the neuroscience \cite{mcdannald2012model, dayan2014model, daw2011model} and machine learning \cite{hassabis, kool2016does, daw2014algorithmic, doya2002multiple} communities have shown a growing interest in model-based learning mechanisms , leading to mutually informative lines of investigation (e.g., understanding how biological brains encode model-based learning strategies provides hints for overcoming the challenges of model-based learning in artificial agents). Evidence from human neuroimaging studies suggests that model-free learning computations in the BG are regulated by top-down inputs from a model-based learning system in the prefrontal cortex (PFC) \cite{doll}. Critically, due to the looped architecture of cortico-BG pathways, model-based computations in cortex are informed by feedback-dependent updates in the action-value landscape. Over time, cortical model-based learning systems generate predictions based on model-free computations and, in turn, provide top-down constraints that regulate feedback sensitivity and decision policies in the BG. This symbiosis between BG- and PFC-dependent learning systems is mirrored in the HQN, with observed state-action values in the Q-Network facilitating better predictive models in the Model Network that, in turn, improve future performance through top-down constraints on action evaluation. This scaffolding of model-based and model-free learning computations accelerates the learning process by proactively testing different hypotheses about the rule structure of the task and constraining future decision policies as confidence increases about the fidelity of these expectations.

\subsection{Impartial Games as a Benchmark}

\par We should point out that we are not the first to observe and leverage the fact that the benchmark environment chosen profoundly influences the learning agents we design. Although the success of the DeepMind Atari Network was impressive, the benchmark featured implausible 2D environments through a third person perspective.  Kempka et al. developed VizDoom \cite{vizdoom}, a dynamic first-person perspective learning environment as an alternative testing benchmark for visual RL agents. $RL^2$ \cite{rl2} was evaluated in the VizDoom environment to demonstrate adaptability to high-scale problems. More recently, DeepMind and Blizzard announced a partnership \cite{starcraft} to utilize StarCraft II as a AI research environment. StarCraft II is a third-person strategy game with complicated raw visual input, state and action space, and delayed rewards and punishments to selected actions. Initial results already show that this new learning environment will be a challenge for even to most well-established deep reinforcement learning architectures.

\par We share similar goals with most of the aforementioned research, including designing learning agents that are more adaptable to changes in inputs and goals, as well as ensuring that learning outcomes are interpretable to humans. However, our critical argument is that, in order to achieve these goals, tasks should be designed in which adaptability, as opposed to accuracy, is prioritized. One of the ways our approach separates itself is in the sheer simplicity of the learning task chosen: Impartial games are equipped with rules straightforward enough that winning strategies have a complete mathematical theory. The fact that impartial games are ``solved'' games allows us to conveniently evaluate performance, and shift focus entirely to the transfer-learning and model-building domain.

\par Despite their simplicity, the scalability of impartial games makes them uniquely conducive to experimentation with model-based learning algorithms. Common benchmarks such as multi-armed bandit problems \cite{learningtolearn} lack an environment that needs to be navigated through dynamic model-building: a model for the environment cannot go beyond the predetermined expected value and variance distribution. The complexity of games like Go and Starcraft II, on the other hand, preclude any straightforward approach to model-building. For impartial games, model-building can be performed by exploring the geometrical structure of value over topology of the game environment. Thus, we argue that impartial games offer a more suitable environment for rigorously testing and comparing deep model-based agents. The benefits of using impartial games for benchmarking model-based deep learning are summarized below.

\begin{itemize}
  \item \textit{The rules of impartial games immediately generalize to bigger board dimensions, in a way that preserves the mathematical structure of the winning strategy.} This feature allows us to differentiate between learning agents beyond simply looking at their performance. In order to realize whether a learning agent has truly understood the nature of the game environment, we would just need to benchmark it on a bigger board size. Thus, the learning outcomes of an  agent become more transparent. Games like Chess or Go do not have structure that straightforwardly generalizes across different board sizes, so such an approach at benchmarking would have been infeasible.
  \item \textit{Impartial games offer wide variety of ways to change the rules of the game, without destroying the inherent mathematical structure of the winning-strategy associated with the specific set of rules.} Just by imposing some natural restrictions on the function that returns the set of legal moves from a position on Wythoff's game, we were able to generate two other games (Nim and Euclid) where the winning strategy has similar mathematical structure. Rule changes in games such as Chess or Go, albeit how insignificant, may influence overall strategy in very intricate ways. Thus Chess and Go would be less accessible for initial attempts for transfer-learning across rule changes.
  \item \textit{There are a lot of impartial games where structure and noise can co-exist, similar to the real world.} This is a feature that we did not utilize in this paper, but also reflects an advantage of impartial games. For example, a complete mathematical characterization of the winning (hot) and losing (cold) positions in a 3D Wythoff's game, as of time of writing, has not been discovered. However, results from the 2D version partially generalize to shed some light into optimal behaviour. A learning agent that can figure out how to make generalizations across dimensions could be worthwhile challenge.
\end{itemize}

All these features allow us to conclude that impartial games, when taken as a benchmark for learning agents, allows for asking questions to the agent \textit{where answers in the affirmative demonstrate a type of intelligence that goes beyond a brute-force pattern matching task}.

\newpage

\newpage

\section{Appendix}
In this section, we formalize some definitions referred to in the rest of the paper. We also provide some of the proofs that will create suitable mathematical background for analysing impartial combinatorial games, such as Wythoff's game. We start by giving a formal description of an impartial game.
\begin{definition}\label{def:impartialgame}
Let $S$ be a set of states, and $f:S\rightarrow S$ be the legal moves function. An impartial game is a game played among $p_1$ and $p_2$, such that:
\begin{enumerate}
  \item $p_1$ and $p_2$ alternate in making moves with $p_1$ going first.
  \item Given a state $s$, the move $m$ made by $p_1$ has to be an element of $f(s)$
  \item $p_i$ loses at state $s$ if and only if it is $p_i$'s turn, and $f(s)$ is the empty-set, meaning that there are no legal actions for $p_i$ to do.
  \item There cannot exist a sequence of states $s_1, s_2, \cdots, s_n$ such that $f(s_1)=s_2$, $f(s_2)=s_3$, $\cdots $, $f(s_{n})=s_1$.
  \item From every state $s$ there exists a valid sequence of states $s_1, s_2, \cdots, s_n$ such that $f(s_1)=s_2$, $f(s_2)=s_3$, $\cdots $, $f(s_{n-1})=s_n$
  where $f(s_n)$ is the empty-set. Thus, $s_n$ is a terminal state in the game.
\end{enumerate}
\end{definition}
Conditions $1$, $2$ and $3$ lay out the main structure of the game. Condition $4$ insists that once a state has been reached, it cannot be re-accessed, and thus the game cannot go in cycles. Condition $4$, combined with Condition $5$ ensures that the game will always terminate, since every legal move must decrease the maximum distance to a terminal state where there are no available actions. Once the the distance reaches zero, the player whose turn it is loses, and the other player wins.
\par First, we prove using the principle of mathematical induction that indeed, in any well-defined impartial game, every position will be either hot or cold. First, we formally define the notions of \textit{hot} and \textit{cold}.
\begin{definition}\label{def:hotcold}
  Let $(S, f)$ be an impartial game. Let $s\in S$. We say $s$ is cold if $f(s)=\emptyset$, that is, $s$ is a terminal position. We say $s\in S$ is hot if and only if there exists a $s\in f(s)$ such that $s'$ is a cold position. If $s$ is not a cold position, we say $s$ is cold if and only if for all $s'\in f(s)$, $s'$ is a hot position.
\end{definition}
Thus, the definition of hot and cold recursively builds up from each other, and since terminal states being \textit{cold} constitute the necessary base case, the recursion is well-defined. However, if the reader is unfamiliar with recursive constructions, the theorem we present next does not immediately follow.
\begin{theorem}\label{thm:hotcold}
  Let $(S, f)$ be a impartial game. Then, for all $s\in S$, $s$ is either hot or cold.
\end{theorem}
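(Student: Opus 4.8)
The plan is to argue by induction on a rank function that records how far a state is from the end of the game. For $s \in S$, let $d(s)$ be the length of the longest sequence of legal moves that can be played starting from $s$. By Conditions~4 and 5 of Definition~\ref{def:impartialgame} (exactly the point discussed in the paragraph following that definition) every such play must strictly decrease the distance to a terminal state, so $d(s)$ is a well-defined non-negative integer; moreover $d(s) = 0$ iff $f(s) = \emptyset$, and $d(s') < d(s)$ whenever $s' \in f(s)$. Once this rank is in hand, the theorem is a direct unwinding of Definition~\ref{def:hotcold}.

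For the base case, suppose $d(s) = 0$. Then $f(s) = \emptyset$, so $s$ is terminal and hence cold by definition; in particular $s$ is hot or cold. For the inductive step, fix $n \ge 1$, assume every state $t$ with $d(t) < n$ is hot or cold, and let $s$ have $d(s) = n$. Since $n \ge 1$, $f(s)$ is nonempty, and every $s' \in f(s)$ satisfies $d(s') \le n - 1$, so by the inductive hypothesis each such $s'$ is hot or cold. Now distinguish two cases: if some $s' \in f(s)$ is cold, then $s$ is hot by the definition of a hot position; otherwise every $s' \in f(s)$ is hot, and since $s$ is not terminal the defining clause for a cold position applies, so $s$ is cold. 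In both cases $s$ is hot or cold, which closes the induction. Since $d$ assigns a nonnegative integer to every element of $S$, the induction establishes the conclusion for all $s \in S$.

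The only genuinely substantive point is the first one: verifying that $d$ is well-defined, i.e. that no state admits arbitrarily long or infinite plays. This is where Conditions~4 and 5 do the real work, since acyclicity together with the guaranteed existence of a terminating play is what forces each move to reduce a finite "distance to termination"; everything downstream is just case analysis against Definition~\ref{def:hotcold}. If one prefers not to assume an integer-valued rank a priori, the alternative is to phrase the argument as well-founded induction over the "reachable-from" relation and show that relation admits no infinite descending chain; but for the concrete games treated here (Wythoff, Nim, Euclid) the explicit bound $d\big((a,b)\big) \le a + b$ makes the integer-rank version the cleanest route.
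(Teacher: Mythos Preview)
Your proof is correct and follows essentially the same approach as the paper: both argue by strong induction on the maximum distance from a state to a terminal position (the paper calls this the ``depth''), with the same base case and the same two-case split in the inductive step. Your version is somewhat more careful in explicitly justifying why the rank function $d$ is well-defined via Conditions~4 and~5, which the paper takes for granted, but the logical skeleton is identical.
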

\begin{proof}
  \par Proof is by induction on the maximum distance the state has to a terminal state. We refer to this distance as the depth of the state.
  \par In the base case, the depth is just $0$, implying that the $s$ is a terminal state, then by definition $s$ is cold, so the theorem is true.
  \par Now, we assume inductively that the depth of $s$ is greater than $0$. For all $s'\in f(s)$, the depth of $s'$ in necessarily smaller than that of $s$, thus inductive hypothesis applies to show all such $s'$ is either $hot$ or $cold$. \\[0.05in]
  \textbf{Case 1: } All such $s'$ are $hot$. \\
  Then by definition, $s$ is cold.\\[0.05in]
  \textbf{Case 2: } There exists a $s'$ which is $cold$. \\
  Then by definition, $s$ is hot.
  \par Since these are the only two cases, the result follows by induction.
\end{proof}
Thus we have that for all impartial games, the states can be partitioned into hot and cold positions. The question that remains is what the partition is, given a specific impartial game. We answer this question for the game of Nim. We will prove that a position in $2$ pile Nim is hot as long as the piles are asymmetrical.
\begin{theorem}
  Let $(S, f)$ be the impartial game of Nim restricted to only 2 piles. Then, $(a,b)\in S$ is a cold position if and only if $a=b$.
\end{theorem}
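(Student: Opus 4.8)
The plan is to prove both directions at once by induction on $a+b$, the total number of tokens, using the fact that every legal move in Nim strictly decreases this quantity. For the base case, $a+b=0$ forces $(a,b)=(0,0)$, which is terminal and hence cold by Definition~\ref{def:hotcold}; since here $a=b$, the claimed equivalence holds. For the inductive step I assume the equivalence holds for every position with token count strictly less than $a+b$, and I split into the cases $a\neq b$ and $a=b$.

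If $a\neq b$, say $a<b$ (the case $a>b$ following by the symmetry of the move set under swapping coordinates), then there is a legal move removing $b-a>0$ tokens from the larger pile and landing in the position $(a,a)$. Since $a+a<a+b$, the inductive hypothesis applies and tells us $(a,a)$ is cold. Thus $(a,b)$ has a cold successor, so it is hot by Definition~\ref{def:hotcold}, and by Theorem~\ref{thm:hotcold} it is therefore not cold, which is exactly what we want since $a\neq b$.

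If $a=b$, I must show $(a,a)$ is cold. When $a=0$ this is the terminal position already handled in the base case. When $a>0$, every legal move removes a positive number of tokens from exactly one of the two piles, so it produces a position whose two pile sizes are unequal and whose token count is strictly smaller than $2a$. By the inductive hypothesis every such successor is not cold, and by Theorem~\ref{thm:hotcold} it is therefore hot. Hence every successor of $(a,a)$ is hot, and since $(a,a)$ is not terminal it is cold by Definition~\ref{def:hotcold}. This completes the induction.

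The argument is essentially the classical ``mirroring'' strategy for symmetric Nim repackaged in the hot/cold language, so I do not anticipate a genuine obstacle; the points that need care are (i) invoking Theorem~\ref{thm:hotcold} to pass from ``not cold'' to ``hot'' for the smaller positions supplied by the inductive hypothesis, (ii) checking that reducing the larger pile to match the smaller one is always a legal move, which holds because the target size $a$ satisfies $0\le a<b$, and (iii) separating out the degenerate sub-case $a=0$ so that the terminal position $(0,0)$ is handled consistently.
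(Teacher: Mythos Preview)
Your proof is correct and follows essentially the same approach as the paper: both argue by induction (the paper on the depth of the state, you on $a+b$, which coincide for two-pile Nim since each move removes at least one token and the longest play removes one at a time), split into the cases $a\neq b$ and $a=b$, and use the mirroring move to $(a,a)$ in the unequal case. Your write-up is in fact a bit more careful than the paper's, explicitly invoking Theorem~\ref{thm:hotcold} to convert ``not cold'' into ``hot'' and separating the $a=0$ sub-case.
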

\begin{proof}
  Proof is by induction on depth of the state $(a,b)$. If depth is $0$, we know $(a,b)=(0,0)$, $(a,b)$ is cold, and $a=b$, as desired. If the depth is greater than $0$, we have two cases. \\[0.05in]
  \textbf{Case 1: } $a\neq b$ \\
  Without loss of generality, assume $a>b$. By the rules of Nim, there exists a move that decreases $a$ to $b$. Since $(b,b)$ has a smaller depth, by induction, $(a,b)$ is hot, as desired. \\[0.05in]
  \textbf{Case 2: } $a=b$ \\
  In this case, since in Nim diagonal moves are disallowed, all moves will will bring the game to a state where the piles are asymmetrical. Any new state will have a smaller depth, and by induction, will be hot. Thus, $(a,b)$ is cold, as desired.
\end{proof}
The partition in Nim for arbitrarily many dimensions has a similar structure, but requires a little more background to  prove, hence but we state it below.
\begin{theorem}
  Let $(n_1, n_2, \cdots, n_k)$ represent a $k-dimensional$ Nim game. Then, $(n_1, n_2, \cdots, n_k)$ is a cold position if and only if when $n_1$ through $n_k$ combined with the bitwise exclusive or operation (xor), the result is $0$.
\end{theorem}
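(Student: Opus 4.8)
The plan is to argue by strong induction on the \emph{depth} of a state (its maximum distance to a terminal state), exactly as in the proof of Theorem~\ref{thm:hotcold} and in the two-pile case above; well-definedness of this parameter and the fact that it strictly decreases along every legal move are guaranteed by conditions~4 and~5 of Definition~\ref{def:impartialgame}. Write $x := n_1 \oplus \cdots \oplus n_k$ for the nim-sum (bitwise xor) of the position. The whole argument reduces to two transfer properties:
\begin{enumerate}
  \item if $x = 0$, then every legal move leads to a state with nonzero nim-sum;
  \item if $x \neq 0$, then some legal move leads to a state with nim-sum $0$.
\end{enumerate}
Granting these, the induction is immediate: the base case, depth $0$, is $(0,\dots,0)$, which has nim-sum $0$ and is cold by Definition~\ref{def:hotcold}. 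For the inductive step, if $x=0$ then by~(1) every successor has nonzero nim-sum, hence is hot by the inductive hypothesis, so the state is cold; if $x\neq 0$ then by~(2) some successor has nim-sum $0$, hence is cold by the inductive hypothesis, so the state is hot. Since Theorem~\ref{thm:hotcold} tells us every state is exactly one of hot or cold, this establishes the claimed equivalence.

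Property~(1) is routine: a legal Nim move changes exactly one coordinate, $n_i \mapsto n_i'$ with $0 \le n_i' < n_i$, and the resulting nim-sum is $x \oplus n_i \oplus n_i' = n_i \oplus n_i'$, which is nonzero because $n_i' \neq n_i$. For property~(2), let $p$ be the position of the most significant set bit of $x$. Because the $p$th bit of $x$ equals $1$, an odd number (in particular at least one) of the $n_j$ have their $p$th bit set; pick such an index $i$ and set $n_i' := n_i \oplus x$. Then the new nim-sum is $x \oplus n_i \oplus (n_i \oplus x) = 0$, so it only remains to check that the move $n_i \mapsto n_i'$ is legal, i.e.\ that $n_i' < n_i$: since $x$ has no set bit above position $p$, the numbers $n_i$ and $n_i'$ agree on all bits above $p$, while the $p$th bit of $n_i$ is flipped from $1$ to $0$; hence the highest bit at which they differ is bit $p$, where $n_i$ has a $1$ and $n_i'$ a $0$, so $n_i' < n_i$.

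The only genuinely delicate point — and the one I expect to be the main obstacle to state precisely — is the inequality $n_i \oplus x < n_i$ in property~(2); everything else is bookkeeping with the recursive definitions. The subtlety is that one cannot XOR an arbitrary pile by $x$: the choice of $i$ must be tied to the \emph{top} set bit $p$ of $x$, which is simultaneously what guarantees (via the parity count) that such an $i$ exists and what forces the XOR to \emph{decrease} $n_i$ rather than increase it. Once that step is handled carefully, the remainder follows from the same inductive template already used twice in this appendix.
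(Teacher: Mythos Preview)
Your argument is correct: it is the classical Bouton proof, and every step checks out, including the delicate inequality $n_i \oplus x < n_i$, which you justify properly by anchoring the choice of $i$ to the top set bit of $x$. There is nothing to fix.

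As for comparison with the paper: the paper does not actually prove this theorem. It says the result ``requires a little more background to prove'' and then merely states it, moving on to Euclid instead. So you have supplied a complete proof where the paper offers none. Your induction on depth is moreover fully consistent with the template the appendix uses for Theorem~\ref{thm:hotcold} and for the two-pile case, so the write-up would slot naturally into the appendix with no change of style or prerequisites. The only cosmetic suggestion is that your final paragraph (flagging the inequality as ``the main obstacle'') reads as commentary on a proof sketch rather than as part of a finished proof; since you have in fact already handled that step cleanly in the paragraph above it, you could simply drop the meta-discussion.
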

\par Since bitwise logical operators bring us into the realm of stepwise distributions again, it becomes difficult for a HQN-like agent to converge on optimal performance.
\par For Wythoff's game, the proof for the partition is again somewhat involved, and hence we omit a proof for Theorem \ref{thm:wythoff}. We do present, however, a short proof for the partition of the hot and cold positions for Euclid, making use of the properties of the golden ratio.
\begin{theorem}
  Let $(S, f)$ be the impartial game of Euclid restricted to $2$ dimensions. Then, let $(a,b)\in S$, and without loss of generality, assume $a\leq b$. Then, $(a,b)$ is hot if and only if $a>\frac{1}{\phi}\cdot b$ where $\phi=\frac{1+\sqrt{5}}{2}$ is the golden ratio.
\end{theorem}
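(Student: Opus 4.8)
The plan is to identify the hot/cold partition explicitly and verify it by induction on the depth of a state, exactly as in the proof of Theorem~\ref{thm:hotcold}. By that theorem every state is already known to be hot or cold, so it suffices to exhibit a set $C$ (the candidate cold positions) and check that it is closed under the defining recursion: every state outside $C$ has a legal Euclid move into $C$, and every state in $C$ has all of its legal moves landing outside $C$. Since Euclid is symmetric under $(a,b)\mapsto(b,a)$ we may assume $a\le b$, and since a state with $a=0$ is terminal we may assume $0<a\le b$; the natural candidate is then $C=\{(a,b):a\le b,\ b\ge \phi a\}$ together with its mirror image and the terminal states, i.e.\ the complement of the set $\{a\le b<\phi a\}$ claimed to be hot.

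The single algebraic fact that drives everything is $\phi-1=\tfrac1\phi$, equivalently $\phi^2=\phi+1$. Its consequence is the self-similarity of the threshold under a subtraction step: if $a\le b<\phi a$ then $0\le b-a<a(\phi-1)=a/\phi<a$, so subtracting one copy of $a$ from $b$ makes $b-a$ the new \emph{smaller} coordinate and leaves $a$ as the larger, with ratio $a/(b-a)>\phi$; and conversely $b\ge\phi a$ forces $b-a\ge a/\phi$, so that ratio is $\le\phi$. Because $\phi$ is irrational, no ratio of positive integers equals $\phi$, so all of these inequalities can be taken strict. This immediately handles the hot direction: for a state $(a,b)$ with $a\le b<\phi a$, the move that subtracts $a$ from $b$ once is a legal Euclid move producing $(b-a,a)$, which lies in $C$ by the inductive hypothesis (its ratio exceeds $\phi$), or is the terminal state $(0,a)\in C$ when $b=a$. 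So every state claimed to be hot has a move to $C$.

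For the cold direction, take $(a,b)\in C$ with $0<a\le b$ and $b\ge\phi a$, and consider an arbitrary legal move; every legal Euclid move replaces $b$ by $b-ka$ for some $k\ge1$. I would split on the size of the new smaller coordinate: if $b-ka\ge a$ the new state is $(a,b-ka)$ and one must show $(b-ka)/a<\phi$, i.e.\ that the move rules of Euclid forbid overshooting past the band $[\,a,\phi a)$ in a single step; if $b-ka<a$ the new state is $(b-ka,a)$ with $b-ka=b\bmod a$, and $a/(b-ka)<\phi$ again by the displayed inequality together with irrationality. I expect this cold$\to$hot step to be the main obstacle: it is precisely here that one must pin down the legal-move set of Euclid (which multiples of the minimum coordinate may be subtracted, and how reaching a boundary coordinate is treated — these conventions are what determine whether $C$ really is closed), and check that \emph{no} legal move, not even the most aggressive one, can re-enter $C$. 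The remaining ingredients — the base case that terminal states lie in $C$, the symmetry reduction to $a\le b$, and the fact that depth strictly decreases under any move so the induction is well-founded — are routine. I would therefore organize the write-up as: (1) reduce to $0<a\le b$; (2) prove the lemma $\phi-1=1/\phi$ with its two inequality corollaries; (3) dispatch the hot case with the single-subtraction move; (4) dispatch the cold case by the case analysis on the new minimum coordinate; (5) conclude by induction on depth, invoking Theorem~\ref{thm:hotcold}.
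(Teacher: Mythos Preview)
Your overall plan---induction on depth, verify that the candidate set $C$ satisfies the hot/cold recursion, driven by the identity $\phi-1=1/\phi$---is exactly the paper's. But the obstacle you flag in step~(4) is fatal, not merely a matter of fixing conventions: with $C=\{(a,b):a\le b,\ b\ge\phi a\}$, the claim that every Euclid move from $C$ leaves $C$ is false. From $(1,100)\in C$ the move to $(1,99)$ is legal under any reasonable reading of the rules and remains in $C$, so your case ``$b-ka\ge a$'' cannot be closed. (The paper's own argument for this direction has the same defect: its assertion that ``the only state accessible from $(a,b)$ is $(b-a,a)$'' needs $b<2a$, which fails generically once $b>\phi a$.)

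The underlying issue is that the statement has hot and cold interchanged; the correct result for Euclid is that $(a,b)$ with $0<a\le b$ is \emph{cold} when $b<\phi a$. With the labels corrected, your single-subtraction computation becomes the cold$\to$hot verification---you should add the one-line remark that $b<\phi a<2a$ makes this the \emph{only} legal move, so ``all moves'' is automatic. The direction you flagged then becomes: from $b>\phi a$, exhibit some $q$ with $b-qa$ in the cold band. Here the paper supplies the idea you are missing: the set of integers $k$ for which $(a,k)$ or $(k,a)$ is cold is exactly $a/\phi<k<\phi a$, an interval of length $\phi a - a/\phi = a(\phi-1/\phi)=a$; since consecutive values $b-qa$ differ by $a$ and the endpoints are irrational, the descending sequence cannot leap over this band, so some $b-qa$ lands inside it. Adding this counting step (and swapping the labels) makes your outline go through.
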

\begin{proof}
   Given a game state $(a,b)$, it suffices to show (1) that if $a\leq\frac{1}{\phi}\cdot b$, then $a$ is a $cold$ position, and (2) otherwise $(a,b)$ is a $hot$ position. Since we reduce one of the dimensions each move, and theorem works for terminal positions trivially, we can inductively assume theorem works for all accessible states from a given state.
   \par Let $(a,b)$ be a game state for Euclid, and suppose without loss of generality that $a\leq b$.
   \par \textbf{(1)} First, let $a<\frac{1}{\phi}\cdot b$. Since $ \frac{1}{\phi}>\frac{1}{2}$, the only state accessible from $(a,b)$ is $(b-a, a)$. We need to show $b-a>\frac{1}{\phi}\cdot a$, which implies by inductive hypothesis that $(b-a, a)$ is a $hot$ position.
   \begin{align*}
     b-a &> \phi \cdot a - a &&\text{Since $\phi\cdot a<b$ by assumption}\\
         &= (\phi-1)a \\
         &= \frac{1}{\phi}\cdot a &&\text{$\phi -1=\frac{1}{\phi}$ by definition of the golden ratio}
   \end{align*}
   \par \textbf{(2)} Now, we let $a>\frac{1}{\phi}\cdot b$. We want to access a game in the form $(a,b-qa)$ where $q$ is an integer, and $((a,b-qa))$ is a $cold$ position. By inductive hypothesis, this is equivalent to saying $a>\frac{1}{\phi}\cdot(b-qa)$ or $(b-qa)>\frac{1}{\phi}\cdot(a)$, depending on whether $a$ or $(b-qa)$ is the larger integer.
   \par The only reason why we would be unable to access such a state is while removing multiples of $a$ from $b$, we skip over the entirety of the $cold$ range. This would only happen if $a$ was a number larger then the number of unique $k$'s such that $(a,k)$ is a $cold$ position, i.e. $k<\frac{1}{\phi}\cdot a$ or $a<\frac{1}{\phi}\cdot k$ hold true. Combining the inequalities, we see that we need to count the number of integer $k$'s such that $$\frac{1}{\phi}\cdot a<k<\phi\cdot a$$ There will be precisely $a$ such values for $k$, so the entirety of the $cold$ range cannot be leaped over, as desired.
\end{proof}

\begin{figure}
\centering
\begin{subfigure}{.5\textwidth}
  \centering
  \includegraphics[width=.93\linewidth]{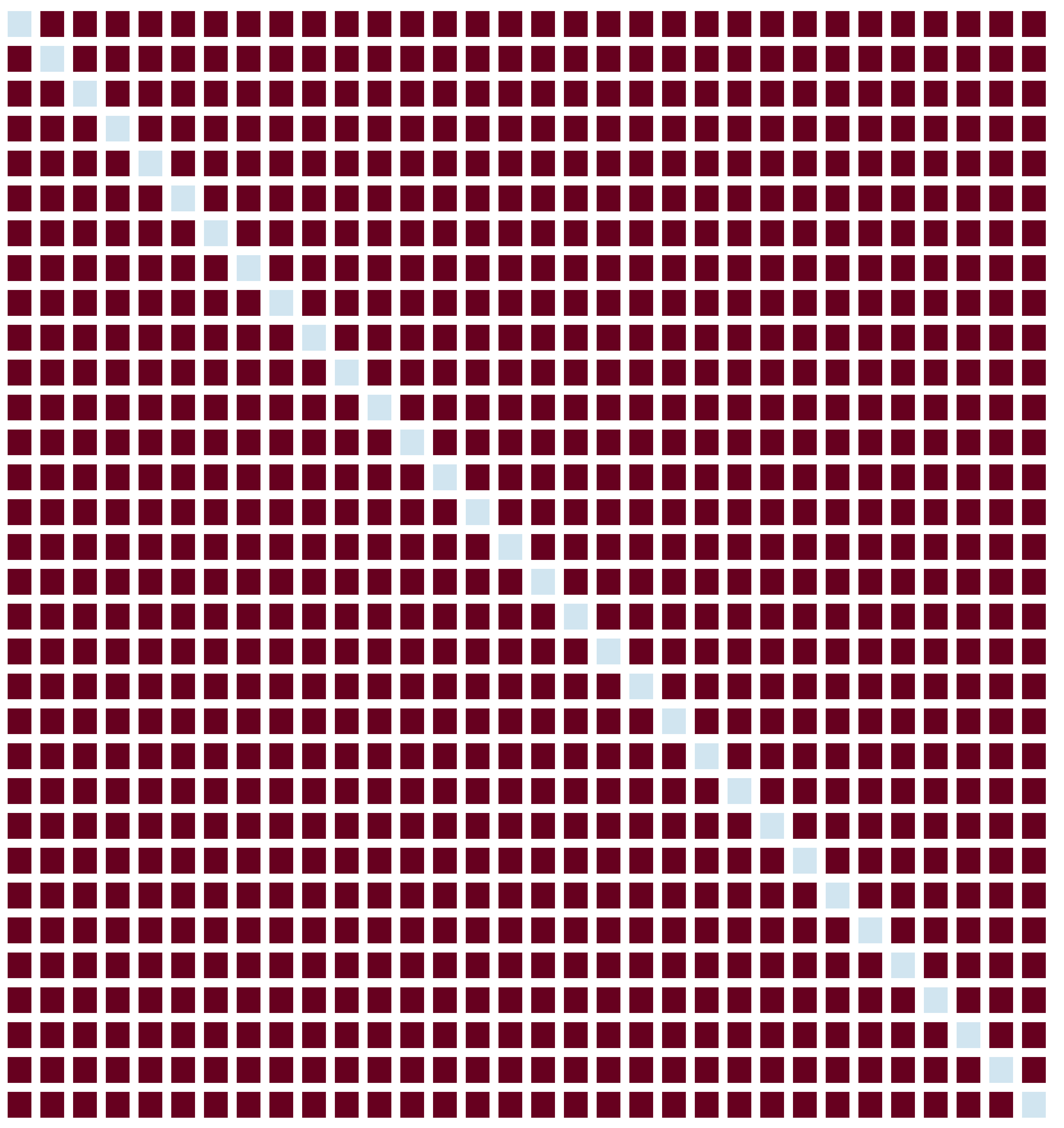}
  \caption{Nim}
  \label{fig:sub1}
\end{subfigure}%
\begin{subfigure}{.5\textwidth}
  \centering
  \includegraphics[width=.93\linewidth]{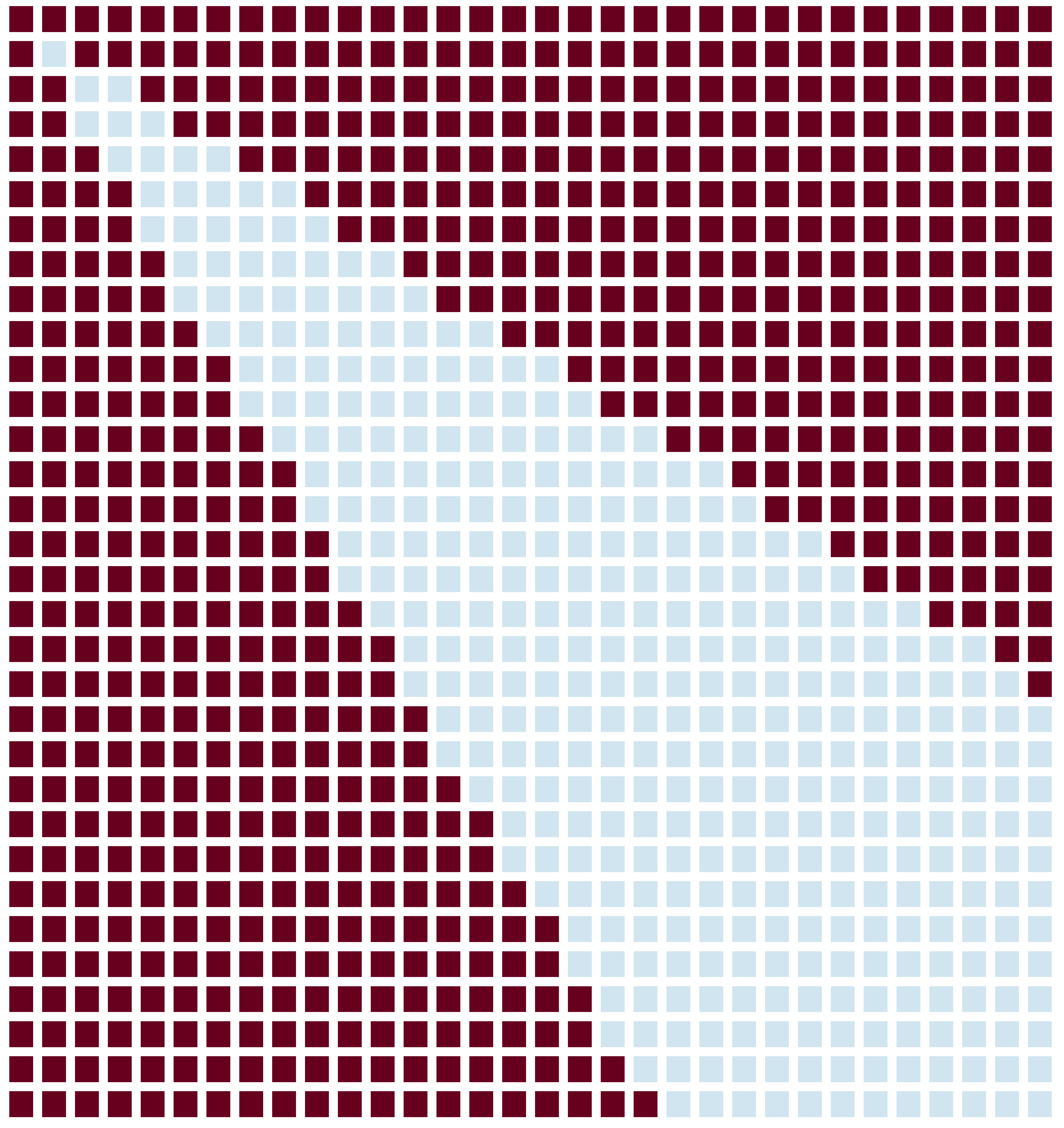}
  \caption{Euclid}
  \label{fig:sub2}
\end{subfigure}
\caption{$Hot$-$Cold$ Partitions for the impartial games Nim and Euclid. The mathematical rules that generate the partitions is similar to that of Wythoff's game; however, Nim and Euclid lack stepwise distribution along a region, unlike Wythoff's game.}
\label{fig:nimeuclid}
\end{figure}

\end{document}